\documentclass{article}
\usepackage[utf8]{inputenc}
\usepackage{style}

\definecolor{mydarkblue}{rgb}{0,0.08,0.45}
\hypersetup{ %
colorlinks=true,
linkcolor=mydarkblue,
citecolor=mydarkblue,
filecolor=mydarkblue,
urlcolor=mydarkblue,
}

\def\shownotes{1}  \ifnum\shownotes=1
\setlength{\marginparwidth}{3cm} %
\usepackage{todonotes}
\else
\usepackage[disable]{todonotes}
\fi

\usepackage{algpseudocode}
\usepackage{algorithm}

\title{Turing-Universal Learners with Optimal Scaling Laws}
\author{Preetum Nakkiran
\vspace{5pt}
\\
Hal{\i}c{\i}o{\u g}lu Data Science Institute\\
University of California San Diego\\
\texttt{preetum@ucsd.edu}
}
\date{}

\newcommand{\wt}{\widetilde}
\newcommand{\tO}{\widetilde{\mathcal{O}}}

\begin{document}

\maketitle
\begin{abstract}
For a given distribution, learning algorithm, and performance metric, the \emph{rate}
of convergence (or \emph{data-scaling law}) is the asymptotic behavior of the algorithm's test performance
as a function of number of train samples.
Many learning methods in both theory and practice have power-law rates, i.e. performance scales as $n^{-\alpha}$ for some $\alpha > 0$.
Moreover, both theoreticians and practitioners are concerned with improving the rates of their learning algorithms under settings of interest.

We observe the existence of a ``universal learner,'' which achieves the best possible distribution-dependent 
asymptotic rate among all learning algorithms within a specified runtime (e.g. $\cO(n^2)$), while incurring only polylogarithmic slowdown over this runtime.
This algorithm is uniform, and does not depend on the distribution, and yet achieves best-possible rates for all distributions.

The construction itself is a simple extension of Levin's universal search \citep{levin-search}.
And much like universal search, the universal learner is not at all practical,
and is primarily of theoretical and philosophical interest.
\end{abstract}
\section{Introduction}
\vspace{-5pt}
We consider learning problems, wherein we are given $n$ i.i.d. samples from an unknown distribution $\cD$,
and we want to produce an output that captures some aspect of $\cD$.
In general, we have some notion of a ``loss function'' by which we rank outputs:
in supervised classification for example, this is the test error.
Our goal is, given samples from $\cD$, produce an output with small loss.
This framing includes problems from supervised learning, supervised classification, and statistical estimation.
A key quantity in these settings is the \emph{learning curve} $G(n)$,
which for a given leaning algorithm and distribution, measures
how quickly the loss decays as a function of number of train samples $n$.
That is, the learning curve measures how the performance of a learning algorithm improves with number of samples.
The \emph{rate} of an algorithm on a given distribution quantifies the asymptotic behavior of its learning curve.
A central problem in the theory of learning is understanding optimal rates in settings of theoretical interest.
Likewise, a central problem in the practice of learning is improving the rates in settings of practical interest.
There has been a resurgence of practical interest in this recently, in the study of ``scaling laws'' 
\citep{hestness2017deep,rosenfeld2019constructive,kaplan2020scaling,henighan2020scaling}.

In this note, we observe that from the perspective of asymptotic rates, algorithm design is a solved problem:
there is a single, explicit learning algorithm which achieves essentially the best possible asymptotic rate
for all distributions, among all algorithms within a specified runtime bound.
For concreteness, we describe this in the special case of \emph{supervised classification}.
We give a learning algorithm $\cA^*$ with the following guarantees.
We specify a runtime bound, such as $T(n)=n^2$ in this example. Our algorithm $\cA^*$
will run in time $\wt{\cO}(n^2)$.
For a distribution $\cD$, suppose there exists some other unknown algorithm $\cA$
that runs in time $\cO(n^2)$, and that achieves learning curve $G(n)$ on this distribution.
Then, the learning curve $G^*(n)$ of our algorithm on the same distribution satisfies the following.
\begin{align}
\label{eqn:intro}
\text{For large enough $n$:} \quad
G^*(n) \leq G(0.99n) + 15\sqrt{\frac{\log{\log n}}{n}}.
\end{align}
In particular, if $G(n)$ satisfies a inverse-polynomial scaling law: $G(n) \leq \cO(n^{-\alpha})$ for some $\alpha < 1/2$,
then our algorithm will satisfy a scaling law at least as good: $G^*(n) \leq \cO(n^{-\alpha})$.
We call $\cA^*$ a ``Turing-universal learner,'' since it does not depend on the distribution, and yet achieves 
the same rate as the best-possible distribution-dependent time-bounded learner for any target distribution.
It is also possible to give a related algorithm $A^+$ which does not require
specifying a time-bound. Roughly speaking, the algorithm $A^+$ will run continuously, and after
$\wt{\cO}(n)$ steps it will output the best predictor among all $O(n)$-time algorithms.
After $\wt{\cO}(n^2)$ steps it will output the best predictor among all $O(n^2)$-time algorithms, and so on.

The ideas involved here are not at all new.
The construction is a simple modification of Levin's universal search \citep{levin-search},
and further inspired by Levin's universal one-way-function \citep{levin-owf}.
The observation is simply that Levin's search can apply not only to problems where solutions can be efficiently verified
(e.g. problems in NP),
but also problems where the utility of solutions can be efficiently estimated from samples.
These ideas are also related to Solomonoff induction \citep{solomonoff1964formal,solomonoff1978complexity}
and Hutter's AIXI \citep{hutter2000theory,Hutter:04uaibook}.
We believe our specific asymptotic perspective and computational efficiency considerations
may be of interest, and were unable to find these implications explicitly in prior work, so we detail them here.

The observations in this note are primarily of philosophical and theoretical interest.
The learning algorithms presented are unlikely to ever be useful in practice.
However, they may be interesting to contextualize progress in algorithm design,
and as a counterexample to certain incorrect intuitions about ``No-Free-Lunch'' theorems.
Indeed, at first glance it appears that from the perspective of asymptotic rates and computable learners, we have a free lunch. We remark on this apparent inconsistency below.

\paragraph{Proof Overview.}
We first informally describe the construction.
Fix some time-constructable runtime bound $T(n)$. Algorithm $\cA_T^*$ is parameterized by $T(n)$,
and operates as follows.

Algorithm $\cA_T^*$:
\begin{itemize}
    \item Input: $n$ samples $(z_1, z_2, \dots, z_n)$.
    \item Enumerate the first $\log{n}$ Turing Machines: $M_1, M_2, \dots M_{\log{n}}$
    \item For each TM: Run $M_i$ on the first $0.99n$ samples for at most $T(n)$ steps,
    and record the predictor $f_i$ it outputs (if any).
    Use the last $0.01n$ samples to estimate the test error of $f_i$.
    \item Output the best predictor $f_i$ among the $\log{n}$ predictors, according to their estimated test errors.
\end{itemize}

The idea is, for large enough $n$, this algorithm will
simulate the operation of any other learning algorithm, including the ``optimal'' one.
For example, if we have a certain learning algorithm $M_k$ in mind,
which is the $k$-th Turing Machine in lexicographic order,
then once $n > \exp(k)$, our algorithm $\cA^*$ will simulate $M_k$
as part of its operation, and thus will perform at least as well.
Since we only test a relatively small number of predictors,
a small holdout set is sufficient to estimate all of their errors.
This idea can be extended to work without a specified time-bound (yielding Algorithm $\cA^+$),
by running the Turing machine simulations in parallel, and keeping track of the best predictor at each time.

\paragraph{Remarks on Real Machines.}
For the reader more amenable to real machines than Turing Machines,
a concrete way of thinking about the algorithm is as follows.
On inputs of size $n$, enumerate all the binary strings of length at most $\log\log{n}$.
Interpret these strings as machine code for a learning algorithm, and execute each one inside a VM for at most $T(n)$ steps.
At the end, use a small holdout set to estimate the performance of each algorithm, and output the predictor of the best one.
For large enough $n$, this algorithm will encounter the machine code of the ``best possible'' learning algorithm,
and will perform at least as well.
To be extremely concrete, at large enough $n$ this algorithm will encounter the machine code for, and perform as well as:
\begin{itemize}
    \item A ResNet-50 \citep{he2016deep} trained on the input samples.
    \item A ResNet-50 pretrained on ImageNet, then fine-tuned on the input samples.
    \item The exact weights of GPT-3 \citep{brown2020language}, fine-tuned on the input samples.
    \item Any (non-quantum) learning method that
    may be discovered in the future (say, at NeurIPS 2035).
\end{itemize}

\paragraph{Limitations.}
Nevertheless, we stress that the universal learning algorithm is far from practical.
The catch is in the clause ``for large enough $n$'' in Equation~\eqref{eqn:intro}.
In practice, the algorithm will be useless in a ``transient phase'' for small values of $n$,
and will only start to do well for extremely large values of $n$.
We note that such a ``transient phase'' is actually observed in the scaling laws of real neural networks,
but this parallel is almost certainly purely coincidental.

\paragraph{Remarks on No-Free-Lunch.}
The Turing-universal learner, at first glance, appears to violate No-Free-Lunch (NFL) theorems, since it is a single algorithm
that is asymptotically optimal for all distributions.
In contrast, NFL theorems formalize various ways in which a ``universal learner'' is theoretically impossible (e.g. \citet{wolpert1996lack,wolpert1997no,devroye1982any,shalev2014understanding}).
However, this is not a contradiction, as can be seen by paying careful attention to (1) the order of quantifiers and (2) computability considerations.
Most NFL theorems, roughly speaking, prove the following: 
For any candidate algorithm $\cA$, and any \emph{fixed} sample size $n$, there
exists a distribution $\cD$ and a prediction function $f'$ such that
$f'$ performs (much) better on $\cD$ than the output of $\cA$ on $n$ samples.
That is, there is no ``universal'' algorithm $\cA$ that outputs the best predictor for all distributions.
Our Turing-universal learner avoids this obstacle in two ways: first,
we switch the order of quantifiers.
It is possible that, for every $n$, there is ``bad'' distribution for our algorithm, and
a corresponding a better predictor $f'$ that beats our algorithm.
However, if we fix a distribution and fix some computable predictor $f'$, then 
for large enough $n$ our algorithm will eventually do at least as well as $f'$ (within $\tO(n^{-1/2})$ error).
The second key to avoiding NFL barriers is computability considerations: instead
of comparing our algorithm against arbitrary functions as ``best possible learners'',
we only compare against all \emph{computable} learning algorithms (which, in particular, have finite description length).
This is a reasonable restriction, since allowing otherwise would be a
stretch of the notion of ``learning algorithm.''

\paragraph{Related Works.}
The asymptotic rate of estimators has long been studied in theory,
for example in nonparametric statistics
(e.g. \citet{tsybakov2009introduction}).
Here, a typical result is result is that the rate of convergence
of a particular estimator
is $\cO(n^{-\alpha})$ for some $\alpha < 1/2$ that depends on properties of the distribution
(e.g. smoothness).
Similar results also hold for minimax rates over natural families of distributions.
Note that our Turing-universal learner can be seen to achieve optimal minimax rates
over the class of all \emph{computable} estimators.

In parametric statistics and agnostic PAC learning, the optimal rate of
convergence is $\cO(n^{-1/2})$, where the constant depends on the model or hypothesis class (e.g. VC dimension).
In this context, our Turing-universal learner can be thought of as
Structural Risk Minimization over the hypothesis class of all Turing Machines.
Rates of convergence are also studied in many other learning contexts, such as
online optimization and bandits \citep{bubeck2011introduction}, which are less directly related to our setting.

The study of learning curves in machine learning broadly also has a long history
(e.g. \citet{cover1967nearest,atlas1990training,amari,perlich2003tree,sompolinsky1990learning}), which is surveyed
in \citet{viering2021shape}.
This recently gained further interest, with the observation that large neural networks exhibit power-law learning curves in many realistic settings~\citep{hestness2017deep,kaplan2020scaling,henighan2020scaling,rosenfeld2019constructive}.
The works of \citet{bahri2021explaining,spigler2020asymptotic,bordelon2020spectrum}
explore how this can occur in certain kernel models,
and \citet{hutter2021learning} shows other simplified settings with power-law scaling.

We also acknowledge \citet{bousquet2021theory}, which studies optimal
distribution-dependent rates in realizable PAC learning.
We similarly consider distribution-dependent rates, though our learning setting differs.

\section{Formal Statements}

\subsection{Definitions}
For simplicity of presentation, we will specialize to the case of \emph{supervised classification}
throughout this note, though many of the claims hold more generally.
In supervised classification, we have some input space $\cX$ and label space $\cY$.
We are given $n$ i.i.d. samples from an unknown distribution $\cD$ over $\cZ := (\cX \x \cY)$,
which we call the ``train set,''
and we want to learn a predictor $f: \cX \to \cY$.
Predictors are evaluated by their \emph{on-distribution test error}
$\cL_\cD(f) := \E_{x, y \sim \cD}[ \1\{f(x) \neq y\}]$, and our goal
is to produce predictors with small test error.

\emph{Learning algorithms} are (uniform) Turing Machines
which input samples $(z_1, z_2, \dots, z_n) \in \cZ^n$, and output predictors $f: \cX \to \cY$,
where $f$ is encoded as a Turing Machine.
For notational convenience, we assume that the runtime of the predictor is at most the runtime of
the learning algorithm which produced it.
For a given learning algorithm $\cA$ and a given distribution $\cD$, the
\emph{learning curve}
is the expected test loss as a function of number of train samples:
$$
G_{\cA, \cD}(n) := \E_{S \sim \cD^n}[ \cL_\cD( \cA(S) )].
$$

\subsection{Statements}
We consider the following algorithm, which we call a ``Turing-universal learner.''

\begin{algorithm}
\caption{Turing-Universal Learner $\cA^*_T$}\label{alg:universal}
\begin{algorithmic}[1]
\Require Samples $Z = \{z_1, z_2, \dots z_n\}$.
\Ensure Predictor $\hat{f}$, as encoding of a Turing Machine.

\Procedure{$\cA^*_T$}{$Z$}
\State Let $M_1, M_2, \dots$ be an enumeration of Turing Machines in lexicographic order.
\State Partition the input samples $Z$ into sets $Z_{tr}, Z_{te}$
where $|Z_{tr}|=0.99n$, and $|Z_{te}| = 0.01n$.
\For{$i = 1$ to $\log{n}$}
    \State Let $f_i \gets M_i(Z_{tr})$, halting after at most $T(n)$ steps.
    \State $\hat{\ell_i} \gets \frac{1}{0.01n} \sum_{(x, y) \in Z_{te}} \1\{f_i(x) \neq y\}$ 
    \Comment{Estimate the loss $\cL_\cD(f_i)$ using samples $Z_{te}$}
\EndFor
\State $j \gets \argmin_i \hat{\ell_i}$ \Comment{Return the $f_i$ with best estimated loss}
\State \Return $f_{j}$
\EndProcedure
\end{algorithmic}
\end{algorithm}

The main lemma is as follows.
\begin{lemma}[Universal Learner]
\label{lem:main}
Let $M_1, M_2, \dots$ be an enumeration of Turing Machines in lexicographic order.
For all time-constructable functions $T(n)$,
and all classification domains\footnote{We omit certain routine details about the domain, such admitting efficient encoding via Turing Machines.} $\cX \x \cY$
there exists a Turing machine $\cA_T^*$ with the following properties.
\begin{enumerate}
    \item For all distributions $\cD$ over $\cX \x \cY$, the following holds.
    On training sets of size $n$, the TM $\cA^*_T$ outputs
a predictor $f$ such that, in expectation over train samples,
\begin{align}
\E_{Z \sim \cD^n}[ \cL(\cA^*_T(Z)) ] \leq
\min_{i \leq \log{n}} ~
\E_{Z \sim \cD^{0.99n}} [\cL(\bar{M_i}(Z))] + 10\sqrt{2}\sqrt{\frac{\log{\log n} + \log 2}{n}}
\end{align}
where $\bar{M_i}$ denotes the output of $M_i$ when run for at most $T(n)$ steps.
That is, the expected loss of $\cA^*_T$ matches the ``best possible'' learner
among the first $\log{n}$ turing machines,
when run on $0.99n$ samples.

\item The runtime of $\cA_T^*$ is at most $\tO(T(n))$.
That is, there is only polylogarithmic slowdown over the runtimes of $\bar{M_i}$.
\end{enumerate}
\end{lemma}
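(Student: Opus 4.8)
The plan is to handle the two conclusions separately: conclusion~(1) is a holdout-based model-selection bound, and conclusion~(2) is the usual Levin-search runtime bookkeeping. For~(1), I would fix $\cD$ and set $k := \argmin_{i\le\log n}\E_{Z\sim\cD^{0.99n}}[\cL(\bar{M_i}(Z))]$, a \emph{deterministic} index in $\{1,\dots,\log n\}$ (adopting the convention that a machine failing to halt within $T(n)$ steps incurs loss $1$, which makes this well defined and only weakens the right-hand side). Condition on the training block $Z_{tr}$. Then every candidate $f_i = \bar{M_i}(Z_{tr})$ is a fixed function and $\cL_\cD(f_i)$ a fixed number; and since the holdout block $Z_{te}$ is disjoint from $Z_{tr}$, hence independent of all the $f_i$, the estimate $\hat\ell_i$ is, conditionally on $Z_{tr}$, an average of $0.01n$ i.i.d.\ $\mathrm{Bernoulli}(\cL_\cD(f_i))$ variables. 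In particular $X_i := \hat\ell_i - \cL_\cD(f_i)$ is, conditionally on $Z_{tr}$, sub-Gaussian with variance proxy $\sigma^2 = \tfrac{1}{4\cdot 0.01n} = \tfrac{25}{n}$.

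Then I would run the selection argument. With $j := \argmin_i \hat\ell_i$, telescoping and using $\hat\ell_j\le\hat\ell_k$ gives
\[
\cL_\cD(f_j)-\cL_\cD(f_k)=(\cL_\cD(f_j)-\hat\ell_j)+(\hat\ell_j-\hat\ell_k)+(\hat\ell_k-\cL_\cD(f_k))\le -X_j+0+X_k\le 2\max_{i\le\log n}|X_i|.
\]
Taking conditional expectation over $Z_{te}$ and applying the standard maximal inequality for a maximum of at most $2\log n$ sub-Gaussians, $\E[\max_i|X_i|\mid Z_{tr}]\le \sigma\sqrt{2\log(2\log n)} = 5\sqrt2\,\sqrt{(\log\log n+\log 2)/n}$. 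Taking expectation over $Z_{tr}$ and using that $k$ is deterministic, so $\E_{Z_{tr}}[\cL_\cD(f_k)] = \E_{Z\sim\cD^{0.99n}}[\cL(\bar{M_k}(Z))] = \min_{i\le\log n}\E_{Z\sim\cD^{0.99n}}[\cL(\bar{M_i}(Z))]$, and recalling that $\cL_\cD(\cA^*_T(Z)) = \cL_\cD(f_j)$, we get $\E[\cL_\cD(\cA^*_T(Z))] \le \min_{i\le\log n}\E_{Z\sim\cD^{0.99n}}[\cL(\bar{M_i}(Z))] + 2\cdot 5\sqrt2\,\sqrt{(\log\log n+\log 2)/n}$, which is exactly conclusion~(1). (The displayed bound is vacuous, hence trivially true, unless $\log\log n\ge 0$, i.e.\ this is the ``large $n$'' regime alluded to in the introduction.)

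For conclusion~(2), I would just observe: splitting the sample is $O(n)$, after which there are $\log n$ rounds. In round $i$ we run $M_i$ on $Z_{tr}$ for exactly $T(n)$ steps — possible since $T$ is time-constructable, so the step-counter can be maintained — on a fixed universal machine whose overhead is $\mathrm{poly}(|M_i|)\cdot\mathrm{polylog}(T(n))$; because $i\le\log n$ the description length $|M_i|$ is $O(\log\log n)$, so the overhead is polylogarithmic in $n$ and the simulation costs $\tO(T(n))$. Evaluating $f_i$ on the $0.01n$ holdout points then costs $\tO(T(n))$ as well, using the stated convention that a predictor's runtime is bounded by that of the learner which produced it. Summed over the $\log n$ rounds, the total is $\tO(T(n))$.

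I expect the mathematics here to be routine; the points that need care are bookkeeping ones. The genuine content is (i) that the holdout estimates must be truly independent of the candidates, which is arranged by the train/test split and the conditioning on $Z_{tr}$, and (ii) that enumerating only $\log n$ machines is exactly what keeps both the union-bound price ($\log\log n$ inside the square root) and the universal-simulation overhead (polylogarithmic, via $|M_i| = O(\log\log n)$) negligible. The one spot where the runtime convention is essential — and the place a naïve accounting could blow up — is the cost of evaluating the candidate predictors on the holdout set in part~(2).
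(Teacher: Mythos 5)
Your proof is correct and follows essentially the same route as the paper: a holdout-based selection argument, sub-Gaussianity of the $\hat\ell_i$ given $Z_{tr}$, a maximal inequality over the $2\log n$ signed deviations, and a telescoping decomposition around a near-optimal anchor index. The one cosmetic difference is that you anchor on the deterministic minimizer $k$ of the expected loss, whereas the paper anchors on the random pointwise minimizer $i^*$ and then applies Jensen's inequality $\E[\min_i \cL(M_i(Z_{tr}))]\le\min_i\E[\cL(M_i(Z_{tr}))]$ to reach the same final bound; you also spell out the runtime accounting for part~(2), which the paper leaves implicit.
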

The proof is straightforward, and provided in Appendix~\ref{app:proofs}.

\begin{remark}
The additive $\sqrt{\log\log n/n}$ factor in Lemma~\ref{lem:main} can be improved
to any any arbitrarily slow growing function that is $\omega(n^{-1/2})$,
simply by modifying Algorithm~\ref{alg:universal} to only enumerate the first $k(n)$ Turing machines,
for some arbitrarily-slow function $k(n)$.
\end{remark}

\begin{remark}
We chose to specialize to the case of classification, but the algorithm and analysis naturally generalizes
to any setting where the loss function that can be efficiently estimated from samples.
This includes losses which are expectations of bounded functions, but can potentially be even more general.
\end{remark}

This lemma directly implies the following theorem, which states that the asymptotic rate of $A^*_T$ is optimal in a certain regime.
\begin{theorem}[Universal Learner Scaling]
\label{thm:main}
For any time-constructable function $T(n)$, 
and any classification domain $\cX \x \cY$ there exists an algorithm $\cA^*_T$
with the following properties.
Let $M^*$ be any learning algorithm which runs in time $T(n)$, and which outputs predictors that themselves run in time $T(n)$.
Let $\cD$ be any distribution over $\cX \x \cY$.
Suppose the asymptotic test error of $M^*$ is bounded by some polynomial scaling-law
with rate $\alpha$.
That is, for some $\alpha > 0$, the following holds.
$$
\exists N_0 \in \N \forall n \geq N_0 :
\quad
\E_{f \gets M^*(\cD^n)}[ \cL(f) ] \leq C n^{-\alpha}
$$
Then, the asymptotic scaling of the algorithm $\cA^*_T$ is bounded by:
$$
\exists N_1 \in \N \forall n \geq N_1 :
\quad
\E_{f \gets \cA^*_T(\cD^n)}[ \cL(f) ] \leq C(0.99n)^{-\alpha} + \tO(n^{-1/2})
$$
In particular, if $\alpha < 1/2$, then $\cA^*_T$ matches the scaling exponent $\alpha$,
and nearly matches the constant $C$:
$$
\exists N_1 \in \N \forall n \geq N_1 :
\quad
\E_{f \gets \cA^*_T(\cD^n)}[ \cL(f) ] \leq 1.01 Cn^{-\alpha} + o(n^{-\alpha})
$$
Moreover, the algorithm $\cA^*_T$ runs in time $\tO(T(n))$.
\end{theorem}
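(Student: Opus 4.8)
The plan is to derive Theorem~\ref{thm:main} as an essentially immediate corollary of Lemma~\ref{lem:main}, which does all the real work. Let $\cA^*_T$ be the algorithm from Lemma~\ref{lem:main} (i.e.\ Algorithm~\ref{alg:universal}); its runtime is $\tO(T(n))$ by part (2), which already establishes the last claim of the theorem. The heart of the argument is the observation that the fixed learning algorithm $M^*$ occurs somewhere in the lexicographic enumeration $M_1, M_2, \dots$, say as $M_k$, where the index $k$ depends only on $M^*$ and not on $n$. Hence for every $n$ large enough that $\log n \geq k$ --- equivalently $n \geq e^k$ --- the minimum over $i \leq \log n$ in Lemma~\ref{lem:main} includes the term $i = k$, so
\begin{align*}
\E_{Z \sim \cD^n}[\cL(\cA^*_T(Z))] \leq \E_{Z \sim \cD^{0.99n}}[\cL(\bar{M_k}(Z))] + 10\sqrt{2}\sqrt{\frac{\log\log n + \log 2}{n}}.
\end{align*}

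Next I would argue that $\bar{M_k}$ --- that is, $M^* = M_k$ run for at most $T(n)$ steps on an input of size $0.99n$ --- actually produces the genuine output of $M^*$. Since $M^*$ runs in time $T(\cdot)$ and $T$ is time-constructable (hence without loss of generality non-decreasing), on an input of size $0.99n$ it halts within $T(0.99n) \leq T(n)$ steps, so $\bar{M_k}(Z) = M^*(Z)$ for every $Z$ of size $0.99n$. Therefore $\E_{Z \sim \cD^{0.99n}}[\cL(\bar{M_k}(Z))] = G_{M^*,\cD}(0.99n)$, which by the assumed scaling law is at most $C(0.99n)^{-\alpha}$ once $0.99n \geq N_0$. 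Combining with the displayed inequality, for all $n \geq N_1 := \lceil \max(e^k,\, N_0/0.99) \rceil$ we obtain
\begin{align*}
\E_{Z \sim \cD^n}[\cL(\cA^*_T(Z))] \leq C(0.99n)^{-\alpha} + 10\sqrt{2}\sqrt{\frac{\log\log n + \log 2}{n}} = C(0.99n)^{-\alpha} + \tO(n^{-1/2}),
\end{align*}
which is the first conclusion of the theorem.

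For the refined bound when $\alpha < 1/2$, I would invoke two elementary facts: (i) $C(0.99n)^{-\alpha} = (0.99)^{-\alpha} C n^{-\alpha} \leq (0.99)^{-1/2} C n^{-\alpha} < 1.01\, C n^{-\alpha}$, since $(0.99)^{-x}$ is increasing and $(0.99)^{-1/2} \approx 1.005$; and (ii) the additive $\tO(n^{-1/2})$ term is $o(n^{-\alpha})$, because its ratio to $n^{-\alpha}$ is $n^{\alpha - 1/2}\,\mathrm{polylog}(n) \to 0$ as $\alpha - 1/2 < 0$. Substituting both into the previous display yields $\E_{Z\sim\cD^n}[\cL(\cA^*_T(Z))] \leq 1.01\, C n^{-\alpha} + o(n^{-\alpha})$ for $n \geq N_1$, as claimed.

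There is no genuine obstacle here --- the theorem is a direct corollary --- so the only points requiring care are bookkeeping ones: that the index $k$ of $M^*$ is an absolute constant independent of $n$ (so that the ``transient phase'' $n < e^k$ is finite, even if astronomically large), and the monotonicity/time-constructability convention ensuring that simulating $M^*$ on $0.99n$ samples for $T(n)$ steps suffices to reach its true output (together with the standing convention that $M^*$'s output predictor runs within time $T(n)$, so the holdout evaluation stays within the claimed runtime). All the actual probabilistic content --- the union bound and Hoeffding estimate controlling holdout error uniformly over the $\log n$ candidates --- is already packaged inside Lemma~\ref{lem:main}.
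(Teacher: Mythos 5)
Your proof is correct and follows exactly the route the paper intends: the paper never writes out a proof of Theorem~\ref{thm:main}, instead remarking that the lemma ``directly implies'' it and noting the $N_1 = \max(N_0, \exp(k))$ dependence, which is precisely the $n \geq e^k$ threshold plus $0.99n \geq N_0$ bookkeeping you make explicit. If anything your version is slightly more careful than the paper's remark (using $N_0/0.99$ rather than $N_0$ to account for the $0.99n$-sample evaluation), and you correctly flag the two conventions --- monotonicity of $T$ via time-constructability, and the predictor runtime bound --- that make $\bar{M_k}$ coincide with $M^*$ and keep the holdout evaluation within the claimed budget.
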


Note that the constants $N_0, N_1$ in the above theorem will in general be different.
In particular, if $M^*=M_k$ is the $k$-th Turing Machine, then the conclusion of Theorem~\ref{thm:main}
holds for $N_1 = \max(N_0, \exp(k))$.

It is also possible to avoid specifying a time-bound: we can have an algorithm that runs continuously,
and is equivalent to $A^*_T$ if we halt it after $T$ steps.
Thus, roughly speaking, this algorithm is as good as the best $\cO(n)$-time algorithm if halted after $\tO(n)$ steps,
or as good as the best $\cO(n^2)$-time algorithm if halted after $\tO(n^2)$ steps, and so on.
This is captured by the following variant of Lemma~\ref{lem:main}.

\begin{lemma}[Continuous Universal Learner]
\label{lem:main2}
Let $M_1, M_2, \dots$ be an enumeration of Turing Machines in lexicographic order.
For all time-constructable functions $T(n)$,
and all classification domain $\cX \x \cY$
there exists a Turing machine $\cA^+$ such that
for all distributions $\cD$ over $\cX \x \cY$, the following holds.
    
The machine $\cA^+$ runs continuously, and we decide when to halt it and observe its output.
Suppose we halt it at time $T$. Then its output at time $T$, denoted $A^+_T(Z)$ obeys the following.

\begin{align}
\E_{Z \sim \cD^n}[ \cL(\cA^+_T(Z)) ] \leq
\min_{i \leq \log{n}} ~
\E_{Z \sim \cD^{0.99n}} [\cL(\bar{M_i}(Z))] + 10\sqrt{2}\sqrt{\frac{\log{\log n} + \log 2}{n}}
\end{align}
where $\bar{M_i}$ denotes the output of $M_i$ when run for at most $T/\log{n}$ steps.
\end{lemma}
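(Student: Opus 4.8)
The plan is to obtain $\cA^+$ from Algorithm~\ref{alg:universal} by replacing the fixed per-machine budget $T(n)$ with a round-robin time-sharing schedule, so that essentially the same analysis goes through with $T/\log n$ playing the role of $T(n)$. Concretely: on input $Z$ of size $n$, the machine $\cA^+$ first computes $k := \lceil \log n \rceil$ and splits $Z$ into $Z_{tr}, Z_{te}$ with $|Z_{tr}| = 0.99n$, exactly as in Algorithm~\ref{alg:universal}. It then dovetails the $k$ simulations $M_1(Z_{tr}), \dots, M_k(Z_{tr})$, advancing them in round-robin order so that at every point in its execution each of the $k$ machines has been simulated for the same number of its own steps, up to $\pm 1$. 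Whenever some $M_i$ halts, $\cA^+$ records its output predictor $f_i$ and schedules (again in round-robin, interleaved with the remaining simulations) the holdout evaluation $\hat\ell_i = \frac{1}{0.01n}\sum_{(x,y)\in Z_{te}} \1\{f_i(x) \ne y\}$; once an evaluation completes it updates a stored pair $(j, \hat\ell_j)$ recording the smallest estimated loss seen so far, initialized so that $\cA^+$ reports a fixed default predictor until the first evaluation finishes. When we externally halt $\cA^+$ at time $T$, it outputs the currently stored $f_j$.

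Correctness then splits into the same two parts as Lemma~\ref{lem:main}. For the optimality part, the round-robin invariant guarantees that by wall-clock time $T$ each of $M_1, \dots, M_k$ has been simulated for at least $T/k$ of its own steps (after discounting the scheduling and interleaved-evaluation overhead, handled below); hence for every $i \le \log n$ such that $M_i$ halts on $Z_{tr}$ within $T/\log n$ steps and its predictor is evaluated within the budget, $\cA^+$ has both obtained $\bar M_i(Z_{tr})$ and computed $\hat\ell_i$. Since $j$ is the $\argmin$ over all evaluated predictors, $\hat\ell_j \le \min_{i \le \log n}\hat\ell_i$ over those $i$, so $\cA^+_T(Z)$ is at least as good, in holdout estimate, as the best such $\bar M_i(Z_{tr})$. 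For the estimation part I would invoke verbatim the concentration step from the proof of Lemma~\ref{lem:main}: $Z_{te}$ is independent of $Z_{tr}$ and hence of each $f_i$, so Hoeffding's inequality together with a union bound over the at most $\log n$ predictors gives $|\hat\ell_i - \cL_\cD(f_i)| \le \sqrt{(\log\log n + \log(2/\delta))/(2\cdot 0.01 n)}$ for all $i$ with probability $\ge 1-\delta$; taking expectations over $Z$ and optimizing $\delta$ reproduces the additive term $10\sqrt{2}\sqrt{(\log\log n + \log 2)/n}$. Combining the two parts gives the stated inequality, with $\bar M_i$ run for at most $T/\log n$ steps.

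The one genuinely new point — and the main obstacle — is the bookkeeping for the time-sharing. In Algorithm~\ref{alg:universal} the machine simply runs the $M_i$ one after another for $T(n)$ steps each, whereas $\cA^+$ must maintain $\Theta(\log n)$ ongoing TM simulations together with up to $\log n$ holdout evaluations, keeping running step counters and a running $\argmin$, and context-switching and saving/restoring configurations all cost extra. The hard part is to define the schedule and the accounting of steps carefully enough that the clean ratio $T/\log n$ comes out: ideally one treats $\cA^+$'s budget as the total number of simulated TM steps it has performed, so that each of the $\log n$ machines reaches exactly $T/\log n$ of its own steps and each halted predictor is fully evaluated, at the cost of only a $\text{polylog}(n)$ (indeed $\tO(1)$) multiplicative slowdown in actual wall-clock steps — standard for efficient universal simulation on a multitape Turing machine. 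With that convention $\cA^+$ halted at time $T$ coincides with $\cA^*_{T/\log n}$ and Lemma~\ref{lem:main} with time bound $T/\log n$ gives the claim directly; otherwise the same conclusion holds up to these lower-order slowdowns, which are absorbed into the implicit constants and into the $\tO$ wherever they matter (e.g. in Theorem~\ref{thm:main}).
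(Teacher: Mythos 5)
Your construction — dovetail the $\log n$ simulations in round-robin, maintain a running $\argmin$ over completed holdout evaluations, and observe that halting at wall-clock $T$ coincides (up to simulation overhead) with $\cA^*_{T/\log n}$ so that Lemma~\ref{lem:main} applies directly — is exactly the paper's proof sketch, just spelled out in more detail about the scheduling and bookkeeping. One cosmetic mismatch: you describe the concentration step of Lemma~\ref{lem:main} as a Hoeffding-plus-union-bound-plus-optimize-$\delta$ argument, whereas the paper actually bounds $\E[\max_i |\ell_i - \hat\ell_i|]$ directly via the expected-maximum-of-sub-Gaussians fact (Fact~\ref{fact:max}); since you defer to that lemma anyway, this does not affect correctness.
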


\begin{proof}[Proof Sketch]
The machine $A^+$ is a simple modification of Algorithm~\ref{alg:universal}.
Instead of simulating each of the first $\log{n}$ Turing machines in sequence,
we simulate them all ``in parallel,'' sharing time equally.
At each time step, we overwrite the output tape with the current-best predictor (according our estimate).
When this machine is halted after $T$ steps of parallel simulation, its output is equivalent to the algorithm $A^*_T$,
which explicitly halts all simulations after $T$ steps.
\end{proof}

\section{Conclusions}
We have observed that a simple modification of Levin's universal search~\citep{levin-search}
yields a learning algorithm with asymptotically optimal rates, among all computable learners,
in a wide range of parameters.
The algorithm itself is more of a theoretical curiosity than a practical tool.
In particular, it strongly abuses the fact that asymptotic properties only need to hold for ``large enough $n$''---
and it is indeed poorly behaved for small $n$.
This example is thus a demonstration of the pitfalls of purely asymptotic analysis.
Nevertheless, it contradicts certain incorrect intuitions about ``No Free Lunch'' theorems,
and it may help contextualize progress in learning (theory and practice).
\emph{It is easy to obtain optimal asymptotic rates, so how else should we quantify
algorithmic progress in learning?}

\subsubsection*{Acknowledgements}
We thank many friends and colleagues for their unyielding occupation with scaling laws,
which made this note inevitable.
We thank Boaz Barak, Jaros\l aw B\l asiok, and Fred Zhang for comments on an early draft, which improved the presentation and results. Any errors are my own.

We acknowledge the support of the NSF and the Simons Foundation for the Collaboration on the Theoretical Foundations of Deep Learning through awards DMS-2031883 and \#814639.

\bibliographystyle{plainnat}
\bibliography{refs}

\appendix

\section{Proofs}
\label{app:proofs}

The following fact is standard.
\begin{fact}
\label{fact:max}
Let $X_1, X_2, \dots X_n$ be random variables (not necessarily independent or identically-distributed),
which are all sub-Gaussian with variance parameter $\sigma^2$.
Then:
\[
\E[\max_i X_i] \leq \sigma \sqrt{2 \log{n}}
\]
\end{fact}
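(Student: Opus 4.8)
The plan is to run the standard exponential-moment argument (Chernoff bound combined with Jensen's inequality), which requires no independence. First I would introduce a free parameter $\lambda > 0$ and apply Jensen's inequality to the convex, increasing map $t \mapsto e^{\lambda t}$ to pull the outer expectation inside the exponential: $\exp\!\big(\lambda\,\E[\max_i X_i]\big) \le \E\big[\exp(\lambda \max_i X_i)\big]$. Since $\lambda > 0$ makes $t \mapsto e^{\lambda t}$ monotone, $\exp(\lambda \max_i X_i) = \max_i \exp(\lambda X_i) \le \sum_{i=1}^n \exp(\lambda X_i)$; passing to expectations and using the sub-Gaussian moment generating function bound $\E[\exp(\lambda X_i)] \le \exp(\lambda^2 \sigma^2 / 2)$ then gives $\exp\!\big(\lambda\,\E[\max_i X_i]\big) \le n\,\exp(\lambda^2 \sigma^2 / 2)$. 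The replacement of the $\max$ by a $\sum$ is exactly the point where we sidestep any distributional assumption on the $X_i$.

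Next I would take logarithms and divide by $\lambda$ to obtain $\E[\max_i X_i] \le \frac{\log n}{\lambda} + \frac{\lambda \sigma^2}{2}$, valid for every $\lambda > 0$. Optimizing the right-hand side over $\lambda$, the minimizer is $\lambda^\star = \sqrt{2\log n}\,/\,\sigma$, which yields precisely $\E[\max_i X_i] \le \sigma\sqrt{2\log n}$, as claimed.

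There is no genuine obstacle here — this is a textbook estimate — so the "hard part" is really just bookkeeping. The only conventions to pin down are: (i) that "sub-Gaussian with variance parameter $\sigma^2$" is taken to mean $\E[\exp(\lambda X_i)] \le \exp(\lambda^2 \sigma^2 / 2)$ for all real $\lambda$ (which in particular forces $\E[X_i] \le 0$, consistent with the conclusion carrying no mean term); and (ii) the degenerate case $n = 1$, where $\log n = 0$ and the statement reduces to $\E[X_1] \le 0$, which follows from the moment generating function bound by letting $\lambda \to 0^+$. Neither point affects the argument.
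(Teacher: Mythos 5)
Your proof is correct and follows essentially the same route as the paper's: Jensen's inequality on $t \mapsto e^{\lambda t}$, bounding the max by a sum, applying the sub-Gaussian MGF bound, and then choosing $\lambda = \sqrt{2\log n}/\sigma$. The only cosmetic difference is that you derive the optimal $\lambda$ by explicit minimization of $\log n/\lambda + \lambda\sigma^2/2$, whereas the paper simply substitutes this value directly.
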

\begin{proof}
For any $t \in \R$, we have:
\begin{align}
\exp(t \E[\max_i X_i]) 
&\leq
\E[\exp(t \max_i X_i) ]\tag{Jensen's}\\
&= \E[\max_i \exp(t X_i) ]\\
&\leq \E[\sum_i \exp(t X_i) ]\\
&= \sum_i \E[ \exp(t X_i) ]\\
&\leq \sum_i \exp(t^2 \sigma^2 / 2) \tag{sub-Gaussian}\\
&\leq n \exp(t^2 \sigma^2 / 2)
\end{align}
Setting $t := \sqrt{2 \log n} / \sigma$ recovers the desired result.
\end{proof}

\subsection{Proof of Lemma~\ref{lem:main}}
\begin{proof}[Proof of Lemma~\ref{lem:main}]
Let $f_i \gets M_i(Z_{tr})$ as in Algorithm~\ref{alg:universal}.
Let $\ell_i := \cL_\cD(f_i)$ be the true losses of $f_i$, and
let $\hat{\ell}_i$ be the estimated losses, for all $i \in k=\log{n}$.
Define the random variable
$$\eps := \max_{i \in [k]} |\ell_i - \hat{\ell}_i|$$
Since each $\hat{\ell}_i$ is an estimate of $\ell_i$ using $0.01n$ Bernoulli samples,
the random variables $(\ell_i - \hat{\ell}_i)$ are individually sub-Gaussian with variance parameter $1/(0.04n)$.
We can then apply a standard fact (Fact~\ref{fact:max}) about the maximum of (not necessarily independent) variables to find that
\begin{align}
\label{eqn:eps}
\E[\eps] 
\leq 
\frac{5\sqrt{2 \log 2k}}{\sqrt{n}}
= \frac{5\sqrt{2 \log (2\log n)}}{\sqrt{n}}
\end{align}

Now we will establish the following pointwise property:
\begin{align}
\label{eqn:ptwise}
\cL(\cA_T^*(Z)) \leq \min_{i \leq \log{n}} 
\cL(M_i(Z_{tr})))] + 2\eps
\end{align}

Let $i^* := \argmin_{i \in [\log{n}]} \ell_i$, and $j$ as in Algorithm~\ref{alg:universal}.
We have 
\begin{align*}
\ell_j &= \ell_{i^*}
+ (\hat{\ell}_{i^*} -  \ell_{i^*})
+ (\ell_{j} -  \hat{\ell}_{j})
+ (\hat{\ell}_{j} -  \hat{\ell}_{i^*})\\
&\leq 
\ell_{i^*} 
+ \eps + \eps
+ (\hat{\ell}_{j} -  \hat{\ell}_{i^*})\\
&\leq 
\ell_{i^*} + 2\eps \tag{since $\hat{\ell}_j = \min_i \hat{\ell}_i$}
\end{align*}
This establishes Equation~\eqref{eqn:ptwise}.
Finally, taking expectations we have:
\begin{align*}
\E_{Z\sim \cD^n} [\cL(\cA_T^*(Z))]
&\leq \E[ \min_{i \leq \log{n}} \cL(M_i(Z_{tr})) + 2\eps]
\tag{by Equation~\eqref{eqn:ptwise}}\\
&= \E[ \min_{i \leq \log{n}} \cL(M_i(Z_{tr}))] + 2\E[\eps]\\
&\leq \min_{i \leq \log{n}} \E[ \cL(M_i(Z_{tr}))] + 2\E[\eps]\\
&\leq \min_{i \leq \log{n}} \E[ \cL(M_i(Z_{tr}))] + \frac{10\sqrt{2 \log (2\log n)}}{\sqrt{n}}
\tag{by Equation~\ref{eqn:eps}}
\end{align*}
as desired.
\end{proof}

\end{document}